\newtheorem*{theorem*}{Theorem}
\newtheorem{lem}{Lemma}
\newtheorem*{lem*}{Lemma}
\newtheorem{rem}{Remark}
\newtheorem{ex}{Example}
\begin{document}

\title{Design of Admissible Heuristics for Kinodynamic Motion Planning via Sum-of-Squares Programming}

\author{Brian Paden$^*$, Valerio Varricchio$^*$, and Emilio Frazzoli$^*$
\thanks{$^*$The authors are with the Laboratory for Information and Decision Systems at MIT (e-mail: bapaden@mit.edu; valerio@mit.edu; frazzoli@mit.edu).}}
\maketitle
\begin{abstract}
%
How does one obtain an admissible heuristic for a kinodynamic motion planning problem?
This paper develops the analytical tools and techniques to answer this question.
A sufficient condition for the admissibility of a heuristic is presented which can be checked directly from the problem data. 
This condition is also used to formulate a concave program to optimize an admissible heuristic.
This optimization is then approximated and solved in polynomial time using sum-of-squares programming techniques.
A number of examples are provided to demonstrate these concepts.
\end{abstract}
\section{Introduction}

Many graph search problems arising in robotics and artificial intelligence that would otherwise be intractable can be solved efficiently with an effective heuristic informing the search.
However, efficiently obtaining a shortest path on a graph requires the heuristic to be admissible as described in the seminal paper introducing the $\rm A^*$ algorithm~\cite{hart1968formal}. In short, an admissible heuristic provides an estimate of the optimal cost to reach the goal from every vertex, but never overestimates the optimal cost.

A major application for admissible heuristics is in searching graphs approximating robotic motion planning problems. 
The workhorse heuristic in kinematic shortest path problems is the Euclidean distance from a given state to the goal.
Figure \ref{fig:intro_demo} demonstrates the benefit of using this heuristic on a typical shortest path problem where informing the search reduces the number iterations required to find a solution by $67\%$.
More recently, methods have been developed for generating graphs approximating optimal trajectories in kinodynamic motion planning problems. 
Notable examples include the kinodynamic variant of the $\rm RRT^*$ algorithm~\cite{karaman2010optimal}, the state augmentation technique proposed in~\cite{hauser2015asymptotically}, and the $\rm GLC$ algorithm~\cite{paden2016generalized}.
While this is not a comprehensive literature review on optimal kinodynamic motion planning, the use of admissible heuristics has been proposed for each of these methods (the use of heuristics for $\rm RRT^*$ was proposed recently in~\cite{informedRRT,batchInformedRRT}).  
The kinodynamic motion planning problem and the use of admissible heuristics are reviewed in Section \ref{sec:Prelim} and \ref{sec:admissible_heuristics} respectively.
%

%
A good heuristic is one which closely underestimates the optimal cost-to-go from every vertex. 
This enables a larger number of provably suboptimal paths to be identified and discarded from the search.
While admissibility of a heuristic is an important concept it gives rise to two challenging questions: (i) Without a priori knowledge of the optimal cost-to-go, how do we verify the admissibility of a candidate heuristic? (ii) How do we systematically construct good heuristics for kinodynamic motion planning problems?    

\begin{figure*}\label{fig:intro_demo}
	\centering{}
	\includegraphics[width=1.0\columnwidth]{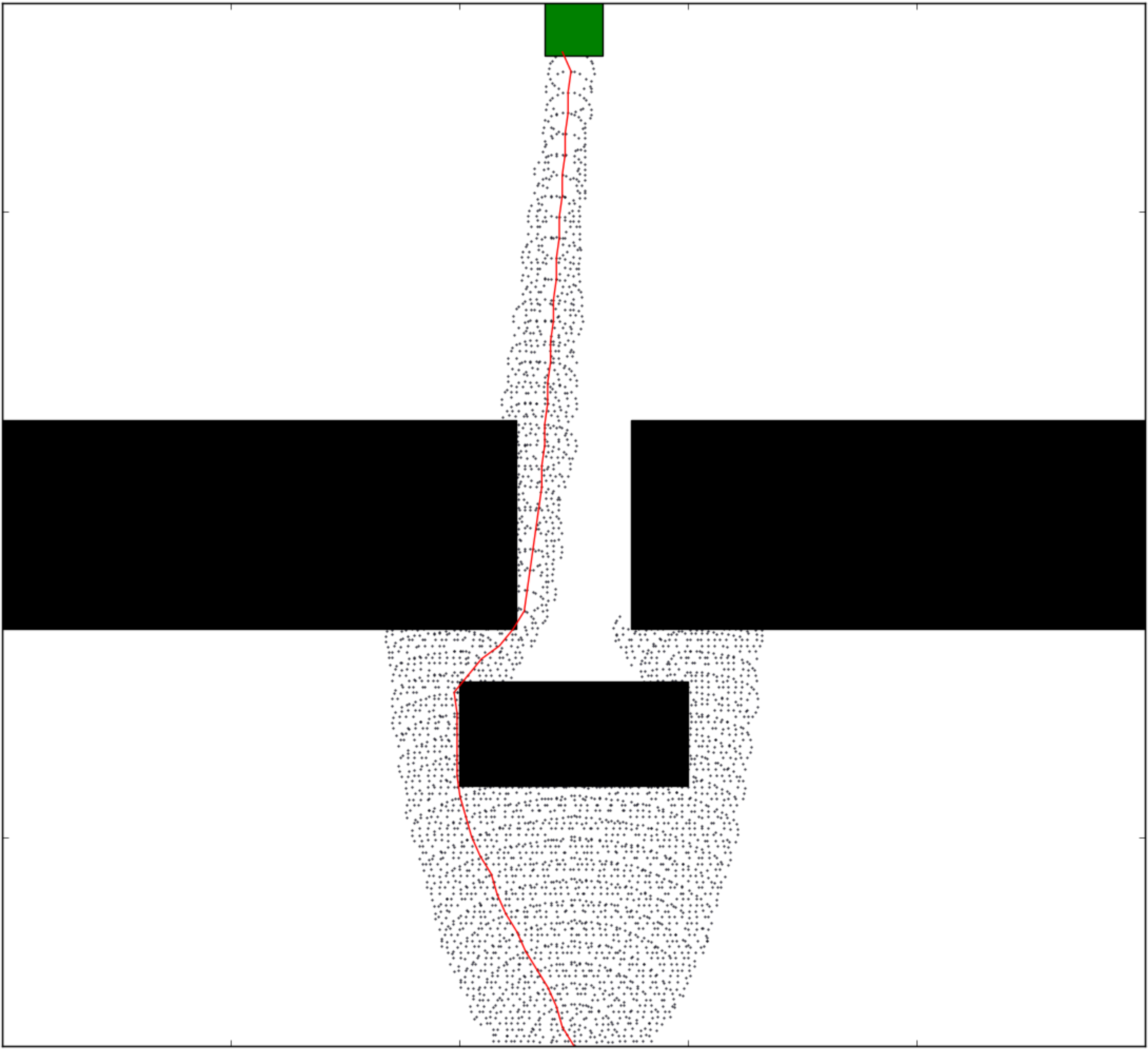}
	\includegraphics[width=1.0\columnwidth]{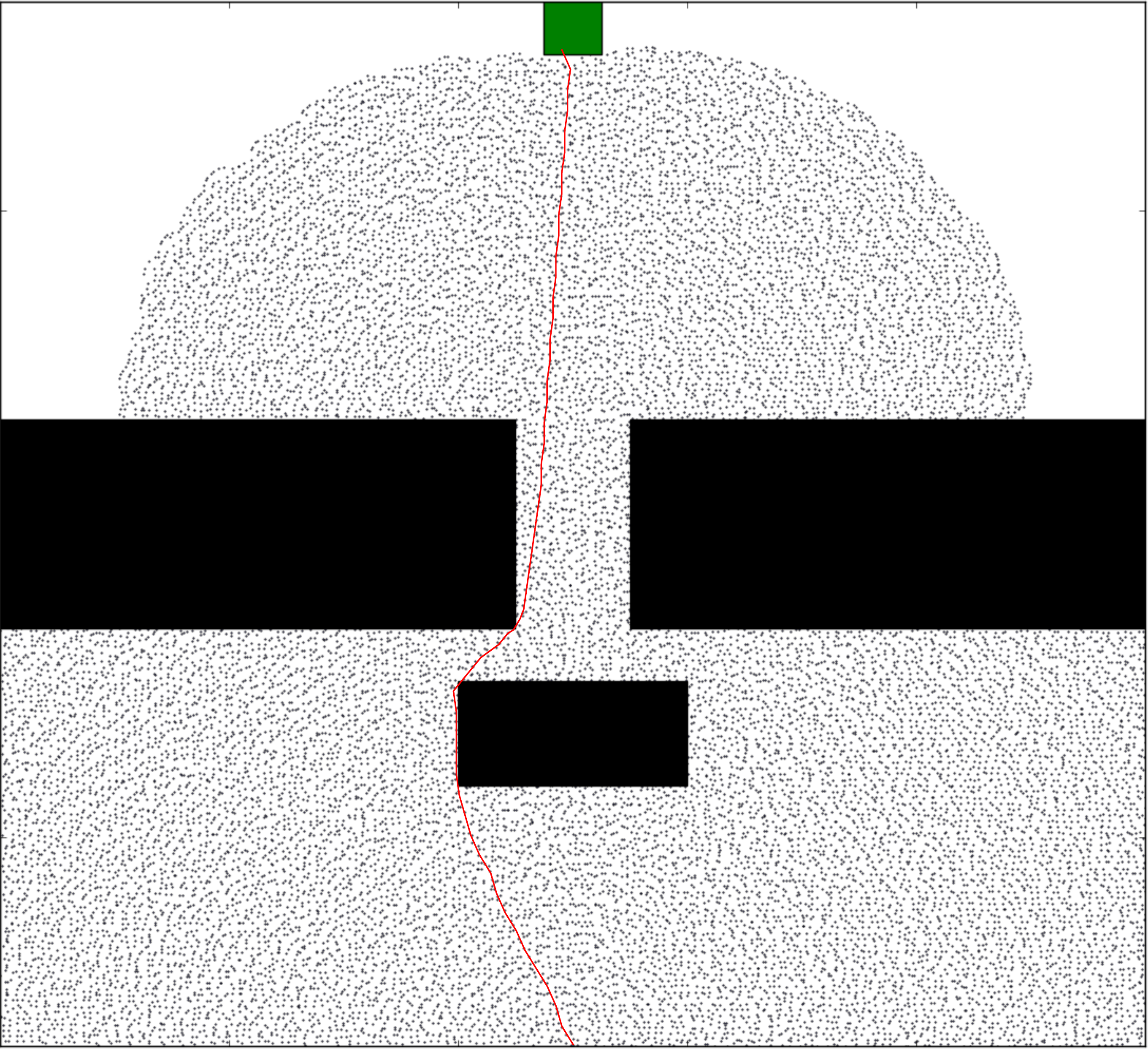}
	\caption{A classic example where an admissible heuristic speeds up a search. Approximate shortest kinematic paths in a 2D environment computed with the generalized label correcting (GLC) method~\cite{paden2016generalized} are shown. Black dots represent vertices of the graph evaluated during the search. The algorithm was executed with (left) and without (right) an admissible heuristic. While the underlying graph is identical, the informed GLC method obtains a solution in 5203 iterations while the standard GLC method obtains a solution in 19030 iterations.}
\end{figure*}
The first question is addressed in Section \ref{sec:admissible_heuristics} where a sufficient condition for the admissibility of a candidate heuristic is presented.
This condition takes the form of an affine inequality involving the heuristic and given problem data.
The result provides a general analytical tool for validating a heuristic constructed by intuition about the problem. 

The second question is addressed in Section \ref{sec:inf_lp}. The admissibility condition is used to formulate a concave maximization over the space of candidate heuristics.
The objective of the optimization is constructed so that the optimal cost-to-go is a globally optimal solution to the optimization.
The approach to analyzing and constructing admissible heuristics is inspired from the dual formulation to the trajectory optimization problem~\cite{fleming1988generalized,vinter1993convex}. 

Section \ref{sec:Characterising-Admissibility-as} outlines a computational procedure for approaching the optimization.
A finite dimensional subspace of polynomials is used to approximate the space of heuristics. 
Sum-of-squares (SOS) programming~\cite{parrilo2004sum} techniques are then used to obtain an approximate solution in polynomial time. 
In doing so we provide the first general procedure to compute admissible heuristics to kinodynamic motion planning problems. 

Examples demonstrating how to use the admissibility condition to verify that a heuristic is admissible as well as numerical examples of the SOS programming approach are provided in Section \ref{sec:Examples_usage}.
The YALMIP~\cite{yalmip} scripts used to compute the example heuristics can be found in~\cite{SOS_heuristics}.
%

%
%
%
%

\section{Kinodynamic Motion Planning\label{sec:Prelim}}

Consider a system whose state at time $t\in \mathbb{R}$ is described by a vector in $\mathbb{R}^n-$the state space.
A \textit{trajectory} $x$ representing a time evolution of the system state is a continuous map from a closed time domain $[0,T]$ to the state space, $x:[0,T]\rightarrow \mathbb{R}^n$ for some $T>0$.
A trajectory in a kinodynamic motion planning problem must satisfy several point-wise constraints. 
First, a subset $X_\mathrm{free}\subset \mathbb{R}^n$ of the state space encodes the set of allowable states over the entire domain of the trajectory; $x(t)\in X_\mathrm{free}$ for all $t\in [0,T]$.
Secondly, there is an initial state constraint, $x(0)=x_0$ for a specified state $x_0\in X_\mathrm{free}$.
Lastly,   there is a terminal constraint;  $x(T) \in X_\mathrm{goal}$ for a specified subset $X_\mathrm{goal}\subset X_\mathrm{free}$. 

In addition to the point-wise constraints, the trajectory must satisfy differential constraints.
At each time $t$ the system is affected by a control action $u(t)$. 
The set of available control actions is a subset $\Omega$ of $\mathbb{R}^m$. 
The time history of control actions is referred to as a \textit{control signal} and unlike a trajectory it need not be continuous. 
However, the control signal is assumed to be Lebesgue integrable and essentially bounded. 
The control action affects the trajectory through the differential equation,
\begin{equation}
\dot{x}(t)=f(x(t),u(t)),\label{eq:ode}
\end{equation}
where $f:\mathbb{R}^n \times \mathbb{R}^m \rightarrow \mathbb{R}^n$.
A trajectory $x$ with domain $[0,T]$ must satisfy (\ref{eq:ode}) for some control signal $u$ for almost all $t\in[0,T]$.
A \textit{feasible} trajectory is one that satisfies these point-wise and differential constraints.
Next, a cost functional $J$ provides a way to quantify the merit of a candidate trajectory and control signal, 
\begin{equation}
J(x,u)=\int_{[0,T]}g(x(t),u(t))\,\, \mu(dt).\label{eq:cost}
\end{equation}
It is assumed $g(z,w)\geq 0$ so that a nonnegative running cost is associated to each state-action pair. The measure $\mu$ is the standard Lebesgue measure.
A solution to an optimal kinodynamic motion planning problem is a feasible trajectory and control signal which minimizes (\ref{eq:cost}).
While the minimum of (\ref{eq:cost}) may not be attained, the optimal value from the initial state $x_0$ is always well defined. 
\subsection{The Value Function}
The cost-to-go or (optimal) value function $V:X_{free}\rightarrow \mathbb{R}$ describes the greatest lower bound on the cost to reach the goal set from the initial state $z\in X_{free}$.
The following properties of $V$ follow immediately from the assumption $g(z,w) \geq 0$ in (\ref{eq:cost}):
\begin{equation}\arraycolsep=1.4pt\def\arraystretch{1.5}
	\begin{array}{c}
		V(z) \geq 0,\qquad \forall z \in X_\mathrm{free}, \\
		V(z) = 0, \qquad \forall z \in X_\mathrm{goal}.
	\end{array}\label{eq:positive_edge_weight}
\end{equation} 
If the value function $V$ is differentiable it is a classical\footnote{The gradient of $V$ is well defined and the equation is satisfied for all $x\in X_\mathrm{free}\setminus \bar{X}_\mathrm{goal}$. In some cases the value function is not differentiable in which case a generalized solution concept known as a viscosity solution is used~\cite{crandall1983viscosity}.} solution to the Hamilton-Jacobi-Bellman (HJB) equation:
\begin{equation}\label{eq:HJB}
\begin{array}{c}
\underset{{w\in \Omega}}{\inf} \left\{ \left\langle \nabla V(z), f(z,w) \right\rangle + g(z,w) \right\}=0, \\\forall z\in X_{free}\setminus \bar{X}_{goal},
\end{array}
\end{equation}
and $V(z)=0$ for all $z$ in the closure of $X_\mathrm{goal}$ (denoted $\bar{X}_\mathrm{goal}$), then $V$ is equal to the value function on $X_\mathrm{free}$. Likewise, if the HJB equation admits a classical solution, then it is equal to the value function.

\section{Graph-Search Oriented Approximations}\label{sec:admissible_heuristics}
Many computational methods for solving the kinodynamic motion problem approximate the set of all possible trajectories by a finite directed graph $(\mathcal{V},E)$,  whose vertices are states in the state space, and whose edges correspond to trajectories between two vertices satisfying (\ref{eq:ode}).
Conceptually, the optimal feasible trajectories restricted to the graph are in some sense faithful approximations of optimal feasible trajectories for the original problem. 
The non-negativity of the cost function (\ref{eq:cost}) enables a nonnegative edge-weight to be assigned to each edge corresponding to the cost of the trajectory in relation with that edge.
The approximated problem can then be addressed using shortest path algorithms for graphs.

The value function $\hat{V}:\mathcal{V}\rightarrow \mathbb{R}$ on the weighted graph is analogous to the value function $V$ in the original problem.
For a vertex $x_0$ in the graph, $\hat{V}(x_0)$ is the cost of a shortest path to one of the goal vertices: $\mathcal{V}\cap X_\mathrm{goal}$.  
Since the feasible trajectories represented by the graph are a subset of the feasible trajectories of the problem we have the inequality
\begin{equation}\label{eq:restriction}
V(z)\leq \hat V(z),\qquad \forall z \in \mathcal{V}.
\end{equation} 

\subsection{Admissible Heuristics}
To carry out an informed search and ensure the optimality of the result, many algorithms require an admissible heuristic $H:X_\mathrm{free} \rightarrow \mathbb{R}$. 
A heuristic $H$ for a problem with value function $V$ is \textit{admissible} if, 
\begin{equation}\label{eq:admissible}
H(z)\leq {V}(z),\qquad \forall z \in X_\mathrm{free}.
\end{equation}
In light of (\ref{eq:restriction}), an admissible heuristic for the kinodynamic motion planning problem will also be admissible for the value function of an approximation to the problem.
For the remainder, the set of candidate heuristics will be restricted to differentiable scalar functions on ${X}_{free}$. 

Since the value function is unknown it is difficult check that (\ref{eq:admissible}) is satisfied for a particular heuristic $H$.
This motivates the first contribution of this paper, a sufficient condition for admissibility that can be checked using the problem data.
\begin{lem}[Admissibility]\label{lem:admissible_condition}
A heuristic $H$  is an admissible heuristic if:
		\begin{equation}\tag{AH1}\label{eq:AH1}
		H(z) \leq 0, \qquad \forall z \in  X_\mathrm{goal},
		\end{equation}
		and
	\begin{equation}\tag{AH2}\label{eq:AH2}
	\left\langle \nabla_{z}H(z),f(z,w)\right\rangle +g(z,w) \geq 0,
	\end{equation} 
for all $u\in\Omega$ and all $z\in X_{free}$.
\end{lem}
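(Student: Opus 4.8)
The plan is to establish the admissibility inequality \eqref{eq:admissible} by a comparison (verification) argument: I will show that for \emph{every} feasible trajectory--control pair $(x,u)$ steering $z$ to the goal, the value of $H$ at the start is bounded above by the cost incurred, i.e. $H(z) \le J(x,u)$. Taking the infimum over all such pairs then yields $H(z) \le V(z)$, since $V(z)$ is by definition the greatest lower bound of $J$ over feasible trajectories from $z$. Working with the infimum rather than a minimizer sidesteps the fact, noted after \eqref{eq:cost}, that the minimizing trajectory need not exist.

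The heart of the argument is to integrate $H$ along a trajectory. Fix a feasible pair $(x,u)$ on $[0,T]$ with $x(0)=z$ and $x(T)\in X_\mathrm{goal}$. First I would differentiate the composition $t \mapsto H(x(t))$: by the chain rule together with the dynamics \eqref{eq:ode}, $\tfrac{d}{dt}H(x(t)) = \langle \nabla H(x(t)), f(x(t),u(t))\rangle$ for almost every $t$. Condition \eqref{eq:AH2}, evaluated at the state $x(t)$ with action $w = u(t)$, supplies the pointwise lower bound $\langle \nabla H(x(t)), f(x(t),u(t))\rangle \ge -g(x(t),u(t))$. Integrating this inequality over $[0,T]$ and invoking the fundamental theorem of calculus on the left produces $H(x(T)) - H(x(0)) \ge -\int_{[0,T]} g(x(t),u(t))\,\mu(dt) = -J(x,u)$, which rearranges to $H(z) \le H(x(T)) + J(x,u)$.

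To finish, I would use the terminal condition: since $x(T)\in X_\mathrm{goal}$, the boundary hypothesis \eqref{eq:AH1} gives $H(x(T)) \le 0$, whence $H(z) \le J(x,u)$. As $(x,u)$ was an arbitrary feasible pair starting at $z$, the quantity $H(z)$ is a lower bound for the achievable costs, so passing to the infimum gives $H(z) \le V(z)$, which is exactly \eqref{eq:admissible}.

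The main obstacle is measure-theoretic rather than conceptual: the control $u$ is only essentially bounded and Lebesgue integrable, so $\dot x$ exists merely almost everywhere, and the chain-rule/FTC step must be justified with care. The clean route is to observe that $x$ is absolutely continuous (in fact Lipschitz, since $f$ is continuous and $u$ is essentially bounded along the bounded trajectory), and that composing the differentiable $H$ with an absolutely continuous path yields an absolutely continuous function whose a.e. derivative is $\langle \nabla H(x(t)), \dot x(t)\rangle$; the fundamental theorem of calculus then recovers $H(x(T))-H(x(0))$ from the integral of that derivative. One should also note that \eqref{eq:AH2} is invoked at the measurable selection $w=u(t)$, which is legitimate precisely because it is assumed to hold for all $w\in\Omega$ and all $z\in X_\mathrm{free}$.
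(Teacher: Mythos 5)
Your proof is correct and follows essentially the same route as the paper's: integrate $H$ along an arbitrary feasible trajectory, use the chain rule with the dynamics and \eqref{eq:AH2} to bound $\tfrac{d}{dt}H(x(t))$ below by $-g$, invoke \eqref{eq:AH1} at the terminal state, and pass to the infimum since $V$ is the greatest lower bound. Your added care about absolute continuity and the a.e.\ chain rule is a welcome refinement of a step the paper treats informally, but it is the same argument.
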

\begin{proof}
Choose a feasible trajectory $x$ and associated control signal $u$.
By construction $x(T)\in X_\mathrm{goal}$ so $H(x(T))\leq 0$. 
Then
\begin{equation} \arraycolsep=1.4pt\def\arraystretch{1.5}
\begin{array}{rcl}
H(x(0)) & \leq & H(x(0))-H(x(T)) \\
        & = & -\int_{0}^{T} \frac{d}{dt} H(x(t)) \,dt \\
        & = &  -\int_{0}^{T}  \left\langle \nabla H(x(t)),f(x(t),u(t)) \right\rangle \,dt \\
        & \leq &  \int_{0}^{T}  g(x(t),u(t) \,dt \\
        & = & J(x,u).
\end{array}
\end{equation} 
Note that the fourth step of the derivation follows from assuming (\ref{eq:AH2}). 
Since $H(x(0))\leq J(x,u)$ for any  feasible trajectory and related control we conclude that $H$ provides a lower bound on the cost-to-go from any initial condition. 
By definition the value function $V$ is the greatest lower bound. Thus, 
\begin{equation}
H(z)\leq V(z),\qquad \forall z\in X_\mathrm{free}.
\end{equation}
\end{proof}
Inequalities of the form (\ref{eq:AH1}) and (\ref{eq:AH2}) appear frequently in the optimal control literature where $H$ is considered a smooth subsolution to the HJB equation.
Observe that Lemma \ref{lem:admissible_condition} does not require $V$ or $g$ to be continuous, nor does $f$ have to be differentiable, and hence is quite general.

An immediate application of this result is as a sufficient condition for the admissibility of a candidate heuristic.   
Section \ref{sec:Examples_usage} provides three examples demonstrating this technique.

Another concept related to admissibility is \textit{consistency}~\cite{hart1968formal}. 
This is a stronger property which is also verified by Lemma \ref{lem:admissible_condition} if (\ref{eq:AH1}) is replaced by the condition that $H(z)=0$ for all $z\in X_\mathrm{goal}$.
The proof can be found in the Appendix.

\section{Optimization of Admissible Heuristics}\label{sec:inf_lp}
The second contribution of this paper is a general procedure for computing and optimizing an admissible heuristic.
To motivate the proposed optimization we review duality results developed by Fleming \cite{fleming1988generalized} and refined by Vinter \cite{vinter1993convex}. Stated informally\footnote{This result requires a relaxed notion of a trajectory and some mild technical assumptions on the problem data; cf \cite{vinter1993convex} for details.}, the result applied to our problem is as follows:
\begin{theorem*}[2.1-\cite{vinter1993convex}] 
	Consider the kinodynamic planning problem 
\begin{equation}\arraycolsep=1.4pt\def\arraystretch{1.5}
\begin{array}{rll}
	\underset{x,u}{\min} & \int_0^T g(x(t),u(t))\,\mu(dt) & \\
	{\rm subject\,to:}  & x(0) = x_0 \qquad \forall z \in  X_{goal}, & \\
	& x(T)\in X_{goal}, & \\
	& x(t) \in X_{free} \quad & \forall t \in [0,T], \\
	& \dot{x}(t)=f(x(t),u(t))\quad & {\rm a.e.} \,t\in [0,T], \\
	& u(t) \in \Omega \quad & {\rm a.e.} \, t \in [0,T].
\end{array} \label{eq:primal}\tag{P}
\end{equation}
The dual problem is
\begin{equation}\arraycolsep=1.4pt\def\arraystretch{1.5}
\begin{array}{rll}
\underset{H}{\max} & H(x_0) & \\
{\rm subject\,to:}  & 		H(z) \leq 0 \qquad \forall z \in  X_{goal},
 & \\
& \left\langle \nabla_{z}H(z),f(z,w)\right\rangle +g(z,w) \geq 0 \\
& \forall z\in X_{free},\,{\rm and}\,\, \forall w \in \Omega,
\end{array} \label{eq:dual}\tag{D}
\end{equation}
and strong duality holds. That is, the  optimal values of the two problems coincide.
\end{theorem*}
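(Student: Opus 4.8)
The plan is to separate the two inequalities hidden inside ``strong duality'' and dispatch them independently. Write $p^\star$ for the optimal value of (\ref{eq:primal}), which by the definition of the value function equals $V(x_0)$, and $d^\star$ for the optimal value of (\ref{eq:dual}).

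\textbf{Weak duality.} First I would observe that the two constraints defining the feasible set of (\ref{eq:dual}) are verbatim the hypotheses (\ref{eq:AH1}) and (\ref{eq:AH2}) of Lemma \ref{lem:admissible_condition}. Hence every dual-feasible $H$ is admissible, so $H(z) \le V(z)$ for all $z \in X_{free}$ and in particular $H(x_0) \le V(x_0) = p^\star$. Taking the supremum over dual-feasible $H$ yields $d^\star \le p^\star$. This half needs neither regularity of $V$ nor the relaxation alluded to in the footnote; it is essentially a restatement of the admissibility lemma.

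\textbf{No duality gap.} The cleanest certificate for $d^\star \ge p^\star$ would be a single dual-feasible $H$ attaining $H(x_0) = V(x_0)$, and the obvious candidate is $V$ itself. Indeed $V = 0$ on $X_{goal}$ by (\ref{eq:positive_edge_weight}), so (\ref{eq:AH1}) holds, and \emph{if} $V$ were differentiable the HJB equation (\ref{eq:HJB}) would give $\langle \nabla V(z), f(z,w)\rangle + g(z,w) \ge 0$ for every $w \in \Omega$, which is exactly (\ref{eq:AH2}); thus $V$ would be dual-optimal and the gap would vanish.

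\textbf{Main obstacle and its resolution.} The difficulty is precisely that $V$ is in general only Lipschitz and satisfies (\ref{eq:HJB}) only in the viscosity sense, so it is not an admissible $C^1$ point and cannot be inserted into (\ref{eq:dual}) directly. I would resolve this through the measure-theoretic linear-programming formulation of Fleming and Vinter. Recast (\ref{eq:primal}) as an infinite-dimensional linear program over a nonnegative occupation measure $\mu$ on $X_{free}\times\Omega$ and a nonnegative terminal measure $\nu$ supported on $X_{goal}$: integrating $\tfrac{d}{dt}H(x(t))$ over $[0,T]$ and testing against $C^1$ functions $H$ turns the dynamics and boundary conditions into the single linear (Liouville) constraint $\int \langle \nabla H, f\rangle\,d\mu = \int H\,d\nu - H(x_0)$, while the objective becomes $\int g\,d\mu$. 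The conic dual of this program is exactly (\ref{eq:dual}): nonnegativity of $\mu$ dualizes to (\ref{eq:AH2}), nonnegativity of $\nu$ dualizes to (\ref{eq:AH1}), and the initial point $x_0$ produces the dual objective $H(x_0)$. Zero duality gap then follows from infinite-dimensional conic LP duality in the weak-$\ast$ topology. I expect the genuinely hard step to be establishing that the set of attainable occupation measures is weak-$\ast$ closed with finite value, since for nonconvex velocity sets $f(z,\Omega)$ this can fail; this is exactly what forces the \emph{relaxed} (Young-measure) notion of trajectory and the compactness assumptions flagged in the footnote, and it is where I would concentrate the technical effort. An equivalent, more PDE-flavoured route would instead mollify the viscosity subsolution $V$ into a sequence of smooth strict subsolutions $H_k$ with $H_k(x_0) \to V(x_0)$, each dual-feasible, so that $d^\star \ge \lim_k H_k(x_0) = p^\star$; there the obstacle migrates to showing the mollification preserves (\ref{eq:AH2}) uniformly.
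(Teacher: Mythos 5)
The paper itself offers no proof of this theorem: it is quoted, explicitly ``informally,'' from Vinter~\cite{vinter1993convex}, with the footnote delegating the relaxed notion of trajectory and the technical hypotheses to that reference. So your proposal must be judged against the cited source rather than an in-paper argument. Your weak-duality half is complete and correct, and it mirrors exactly the role Lemma~\ref{lem:admissible_condition} plays in the paper: the feasible set of (\ref{eq:dual}) is cut out by precisely (\ref{eq:AH1}) and (\ref{eq:AH2}), so every dual-feasible $H$ satisfies $H(x_0)\leq V(x_0)=p^\star$, giving $d^\star\leq p^\star$ with no regularity assumptions on $V$. Your strong-duality half also reconstructs the right architecture, which is indeed that of Fleming~\cite{fleming1988generalized} and Vinter: rewrite (\ref{eq:primal}) as a linear program over occupation/terminal measure pairs coupled by the Liouville identity $\int \langle \nabla H, f\rangle\,d\mu = \int H\,d\nu - H(x_0)$, recognize (\ref{eq:dual}) as its conic dual, and you correctly locate the crux in the weak-$\ast$ closedness of the attainable measure cone, which is exactly what forces relaxed (Young-measure) trajectories when $f(z,\Omega)$ is nonconvex.

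That said, the strong-duality half remains a plan rather than a proof, and you should be clear-eyed about what is deferred. First, ``zero duality gap then follows from infinite-dimensional conic LP duality'' is not a citation-free step: infinite-dimensional linear programs do exhibit duality gaps, and the closedness/compactness argument you flag is the theorem, not a preliminary to it. Second, there is an additional gap you only implicitly address: even granting LP duality, one must show the measure LP has the \emph{same value} as (\ref{eq:primal}) (no relaxation gap), i.e.\ that admissible measures are suitably approximated by (relaxed) trajectories; this is a separate step in Vinter's development. Third, your alternative mollification route is more treacherous than your one-sentence caveat suggests: convolving a viscosity subsolution of a Hamilton--Jacobi inequality does not in general preserve the pointwise inequality (\ref{eq:AH2}) -- one needs sup-/inf-convolution techniques or structural assumptions on $f$ and $g$, and near $\partial X_{free}$ the mollification is not even defined -- so that branch would require real work to salvage. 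None of this makes your outline wrong; it is a faithful roadmap of the cited proof, with the decisive technical content acknowledged but not supplied.
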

Since there is no duality gap the optimal value at the initial condition $V(x_0)$ can be obtained by solving the dual problem. 
Observe that the objective of the dual problem is linear and the constraints are affine making it a linear program.
Problem (\ref{eq:dual}) will not yield a particularly good heuristic since it optimizes the heuristic as a single point. However, it does suggest a related optimization to obtain the value function over any subset of $X_{free}$. 
Instead of optimizing $H$ at a single point, we can take the integral with respect to any positive measure $m$ on $X_{free}$.
If the value function is bounded on the support of $m$, then maximizing the integral is equivalent to solving (\ref{eq:dual}) at  almost every point in the support of $m$.
The integral objective is still linear and the problem remains a (infinite dimensional) linear program: 
\begin{equation}\arraycolsep=1.4pt\def\arraystretch{1.5}
\boxed{
\begin{array}{rll}
		\underset{H}{\max} & \int_{X_{free}}H(z)\,\,m(dz) & \\
		{\rm subject\,to:}  & 		H(z) \leq 0 \qquad \forall z \in  X_{goal},
		& \\
		& \left\langle \nabla_{z}H(z),f(z,w)\right\rangle +g(z,w) \geq 0 \\
		& \forall z\in X_{free},\,{\rm and}\,\, \forall w \in \Omega,
		
	\end{array} \label{eq:convex_opt}\tag{LP}}
\end{equation}
Note that (\ref{eq:convex_opt}) reduces to (\ref{eq:dual}) if a discrete measure concentrated at  ${x_0}$ is used.  

To further justify using the objective in (\ref{eq:convex_opt}) to optimize our heuristic we show that the value function is the solution when it is differentiable.
\begin{lem}
	If the value function $V$ is differentiable on $X_{free}\setminus \bar{X}_{goal}$, then it solves (\ref{eq:convex_opt}). 
\end{lem}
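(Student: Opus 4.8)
The plan is to verify that $V$ is feasible for (\ref{eq:convex_opt}) and then show that it dominates every competing feasible solution pointwise, so that it maximizes the monotone integral objective. Feasibility requires checking the two constraints. Constraint (\ref{eq:AH1}) is immediate: the second line of (\ref{eq:positive_edge_weight}) gives $V(z)=0$ on $X_\mathrm{goal}$, hence certainly $V(z)\leq 0$ there. Constraint (\ref{eq:AH2}) is where the differentiability hypothesis enters. On $X_\mathrm{free}\setminus\bar{X}_\mathrm{goal}$ the value function satisfies the HJB equation (\ref{eq:HJB}); since the infimum over $w\in\Omega$ of the bracketed expression equals zero, each individual term $\langle\nabla V(z),f(z,w)\rangle+g(z,w)$ is at least that infimum, namely $\geq 0$. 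This is exactly (\ref{eq:AH2}) on the non-goal region.

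Next I would address the goal region, where $V$ need not be differentiable. On the interior of $X_\mathrm{goal}$ the value function is identically zero, so $\nabla V\equiv 0$ there and (\ref{eq:AH2}) collapses to the running-cost nonnegativity $g(z,w)\geq 0$, which holds by assumption. This leaves only the boundary $\partial X_\mathrm{goal}$ unaddressed, which is the one technical point requiring care (discussed below).

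With feasibility established, optimality follows cleanly. Because $V$ satisfies (\ref{eq:AH1}) and (\ref{eq:AH2}), Lemma \ref{lem:admissible_condition} applies, and more usefully the same lemma applies to \emph{every} feasible $H$: any $H$ meeting the constraints is admissible, so $H(z)\leq V(z)$ for all $z\in X_\mathrm{free}$. Integrating this pointwise inequality against the positive measure $m$ and using monotonicity of the integral yields $\int_{X_\mathrm{free}} H(z)\,m(dz)\leq \int_{X_\mathrm{free}} V(z)\,m(dz)$ for every feasible $H$. Since $V$ is itself feasible, it attains this upper bound and is therefore a maximizer of (\ref{eq:convex_opt}).

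The main obstacle is the treatment of the boundary $\partial X_\mathrm{goal}$, where the stated hypothesis does not guarantee differentiability of $V$ and where neither the HJB argument nor the interior argument directly applies. I expect this to be handled either by assuming $m$ does not charge this (typically lower-dimensional) set, by a one-sided or viscosity-subsolution reading of (\ref{eq:AH2}) there, or by a limiting argument approaching $\partial X_\mathrm{goal}$ from within $X_\mathrm{free}\setminus\bar{X}_\mathrm{goal}$. Everything else is routine: feasibility is a direct translation of the HJB identity and the boundary condition, and optimality is purely the monotonicity of integration combined with the pointwise domination supplied by Lemma \ref{lem:admissible_condition}.
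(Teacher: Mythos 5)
Your proof follows essentially the same route as the paper's: feasibility via the HJB equation on $X_\mathrm{free}\setminus\bar{X}_\mathrm{goal}$ together with $V=0$ on the goal set, and optimality by applying Lemma \ref{lem:admissible_condition} to every feasible $H$ and integrating the pointwise bound $H\leq V$ against $m$. The boundary issue at $\partial X_\mathrm{goal}$ that you flag is simply glossed over in the paper, which asserts that both constraints hold on $\bar{X}_\mathrm{goal}$ because $V$ vanishes there---so if anything your treatment is slightly more careful than the original.
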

\begin{proof}
	(Feasibility) From (\ref{eq:-4}), $V(z)=0$ for all $z \in \bar{X}_{goal}$ so the constraints (\ref{eq:AH1}) and (\ref{eq:AH2}) are satisfied on $\bar{X}_{goal}$.
	Since $V$ is differentiable, it solves the HJB equation (\ref{eq:HJB}). Thus,
	\begin{equation}
	\begin{array}{c}
	\underset{w \in \Omega}{\inf} \left\{ \left\langle \nabla V(z), f(z,w) \right\rangle + g(z,w) \right\}=0, \\ \forall z\in X_\mathrm{free}\setminus \bar{X}_\mathrm{goal}.
	\end{array}
	\end{equation}
	This implies
	\begin{equation}\arraycolsep=1.4pt\def\arraystretch{1.5}
	\begin{array}{c}
	 \left\langle \nabla V(z), f(z,w) \right\rangle + g(z,w) \geq0, \\ \forall z\in X_\mathrm{free}\setminus \bar{X}_\mathrm{goal},\,{\rm and}\, u\in \Omega. 
	 \end{array}
	\end{equation}
	Therefore, (\ref{eq:AH2}) is satisfied. 
	(Optimality) By Lemma \ref{lem:admissible_condition}, a feasible solution $H$ satisfies $H(z)\leq V(z)$ for all $z \in X_{free}$.
	 Thus,
	\begin{equation}
	\int_{X_{free}} H(z)\,\,m(dz) \leq \int_{X_{free}} V(z)\,\,m(dz).  
	\end{equation}
	That is, the value function provides an upper bound on the objective in (\ref{eq:convex_opt}).
	$V$ is a feasible solution so this upper bound is attained and $V$ is therefore an optimal solution.
	%
	
\end{proof}

\section{Sum-of-Squares (SOS) Relaxation to (\ref{eq:convex_opt}) }\label{sec:Characterising-Admissibility-as}
To tackle (\ref{eq:convex_opt}) with standard mathematical programming techniques, we must approximate the set of candidate heuristics by a finite-dimensional subspace.
The proposed basis for this subspace is a finite collection of polynomials.  
The relaxation can then be addressed efficiently using SOS programming.

SOS programming \cite{parrilo2004sum} is a method of optimizing a functional of a polynomial subject to semi-algebraic constraints. 
The technique involves relaxing the semi-algebraic constraints to a sum-of-squares constraint which is equivalent to a semi-definite program (SDP). 
The advantages of this approach are that the approximate solution is guaranteed to be an admissible heuristic, and the relaxation is a convex program which can be solved in polynomial time using interior-point methods.
\subsection{Sum-of-Squares Polynomials}
A polynomial $p\in\mathbb{R}[x]$ in $n$ variables is said to be a sum-of-squares if it can be written as 
\begin{equation}
p(x)=\sum_{k=1}^{d}q_{k}(x)^{2},\label{eq:-5}
\end{equation}
for polynomials $q_{k}(x)$. 
Clearly, $p(x)\geq0$ for all  $x \in \mathbb R^n$. 
Note also that $p(x)$ is a sum-of-squares if and only if it can be written as 
\begin{equation}
p(x)=m(x)^{T}Qm(x),\label{eq:-4}
\end{equation}
for a positive semidefinite matrix $Q$ and the vector of  \textit{\tiny{}
$\left(\begin{array}{c}
n+d\\
n
\end{array}\right)$} monomials $m(x)$ up to degree $d$.
For a polynomial $p$ admitting a decomposition of the form (\ref{eq:-4}) we write $p \in SOS$. 
Equation (\ref{eq:-4}) is a collection of linear equality constraints between the entries of $Q$ and the coefficients of $p(x)$. 
Finding entries of $Q$ such that $Q\succeq0$ and the equality constraints are satisfied is then a semi-definite program (SDP). 
The complexity of finding a solution to this problem using interior-point methods is generally polynomial in the size of $Q$.
This method of analyzing polynomial inequalities has had a profound impact in many fields. As a result there are a number of optimized solvers~\cite{sedumi,sdpt3} and modeling tools~\cite{sostools,yalmip} available.

\subsection{Optimizing the Heuristic}

To proceed with computing a heuristic using the SOS programming framework the problem data must consist of polynomials and intersections of semi-algebraic sets. 
Let
\begin{equation}\arraycolsep=1.4pt\def\arraystretch{1.5}
\begin{array}{rcl}
X_{free} & = & \left\{ z \in \mathbb{R}^n:\,h_z(z)\geq 0 \right\}, \\
\Omega   & = & \left\{ w \in \mathbb{R}^m:\,h_w(w) \geq 0 \right\}, 	
\end{array}
\end{equation}
for polynomials $h_x$ and $h_u$.
Assume also that $f$, $g$ and the candidate heuristic $H$ are polynomials.
Then the admissibility condition 
\begin{equation}\arraycolsep=1.4pt\def\arraystretch{1.5}
\begin{array}{c}
\left\langle \nabla_{x}H(x),f(x,u)\right\rangle +g(x,u)\geq 0,\\
\forall w\in \Omega,\,\,{\rm and}\,\, z\in X_{free},
\end{array}
\label{eq:-6}
\end{equation}
is a polynomial inequality. 
To restrict nonnegativity of the heuristic to $X_{free}$ and $\Omega$, add the auxiliary SOS polynomials $\lambda_{x}(x)\geq0$ and $\lambda_{u}(u)\geq0$ to the equation as 
\begin{equation}\arraycolsep=1.4pt\def\arraystretch{1.5}
\begin{array}{l}
\left\langle \nabla_{x}H(x),f(x,u)\right\rangle +g(x,u) \\
-\lambda_{x}(x)^Th_{x}(x)-\lambda_{u}(u)^Th_{u}(u) \geq 0, \\
\forall w\in \mathbb{R}^m,\,\,{\rm and}\,\, z\in \mathbb{R}^n,
\end{array}\label{eq:-7}
\end{equation}
which trivially implies the positivity of (\ref{eq:-6}) over $X_\mathrm{free}$
and $\Omega$. 
When $H$ is a polynomial, the objective in (\ref{eq:convex_opt}) is linear in the coefficients of $H$.
Thus, it is an appropriate objective for an SOS program.
The SOS program which is solved to obtain an admissible heuristic is then 
\begin{equation}\label{eq:sosformula}\arraycolsep=1.4pt\def\arraystretch{1.5}
\begin{array}{rll}
 \underset{H,\lambda_{x},\lambda_{u}}{\max}& \int_{X_{free}}H(z)\,m(dz)& \\  
 {\rm subject\, to:}& H(x)= 0, \quad \forall x\in \bar{X}_\mathrm{goal},&\\
 &\left\langle \nabla_{x}H(x),f(x,u)\right\rangle +g(x,u)\\
 & -\lambda_{x}(x)^T h_{x}(x)-\lambda_{u}(u)^T h_{u}(u) &\in SOS,\\
& \lambda_{x}(x),\lambda_{u}(u) \in  SOS.&  \\
\end{array} 
\end{equation}

\section{Examples}\label{sec:Examples_usage}
The remainder of the paper is devoted to examples demonstrating how to apply Lemma \ref{lem:admissible_condition} to verify admissibility, and the SOS relaxation of (\ref{eq:convex_opt}). 

\subsection{Verifying Candidate Heuristics}
%
%
%
%

%
The next three examples demonstrate some techniques utilizing Lemma \ref{lem:admissible_condition} to verify the admissibility of a heuristic.
In particular, the Cauchy-Schwarz inequality and the inequality 
\begin{equation}\label{eq:basic_sos}
|2ab|\leq a^2+b^2,
\end{equation}
are often useful. 
In the first example we show how to use Lemma \ref{lem:admissible_condition} to verify a classic heuristic used in kinematic shortest path problems.
\begin{ex}\label{ex:spp}
	Consider a reformulation of the shortest path problem,
	\begin{equation}
	\dot{x}=u, 
	\end{equation}
	where $x\in \mathbb{R}^n$, and $u\in \{ w\in \mathbb{R}^n :\, \Vert w \Vert = 1 \}$.
	The cost which reflects a shortest path objective is 
	\begin{equation}
	J(x,u)=\int_0^T 1\,\mu(dt).
	\end{equation}
	Let the goal set be $\{0\}$.
	We would like to verify the classic heuristic
	\begin{equation}
	H(x)=\Vert x \Vert.
	\end{equation}
	Applying the admissibility Lemma we obtain
	\begin{equation}  \arraycolsep=1.4pt\def\arraystretch{1.5}
	\begin{array}{rcl}
	\left\langle \nabla H(x),f(x,u)\right\rangle +g(x,u) & = & \dfrac{\left\langle x,u\right\rangle }{\Vert x\Vert}+1\\
	& \geq & \dfrac{-\Vert x\Vert\Vert u\Vert}{\Vert x\Vert}+1\\
	& \geq & -1+1\\
	& = & 0,
	\end{array}
	\end{equation}
	which reverifies the fact that the Euclidean distance is an admissible heuristic for the shortest path problem.
	The crux of this derivation is simply applying the Cauchy-Schwarz inequality in the first step.
	
\end{ex}

In the next example, we derive heuristics for two variations of a classic wheeled robot model.

\begin{figure*}\label{fig:dubins_demo}
	\centering{}
	\includegraphics[width=1.0\columnwidth]{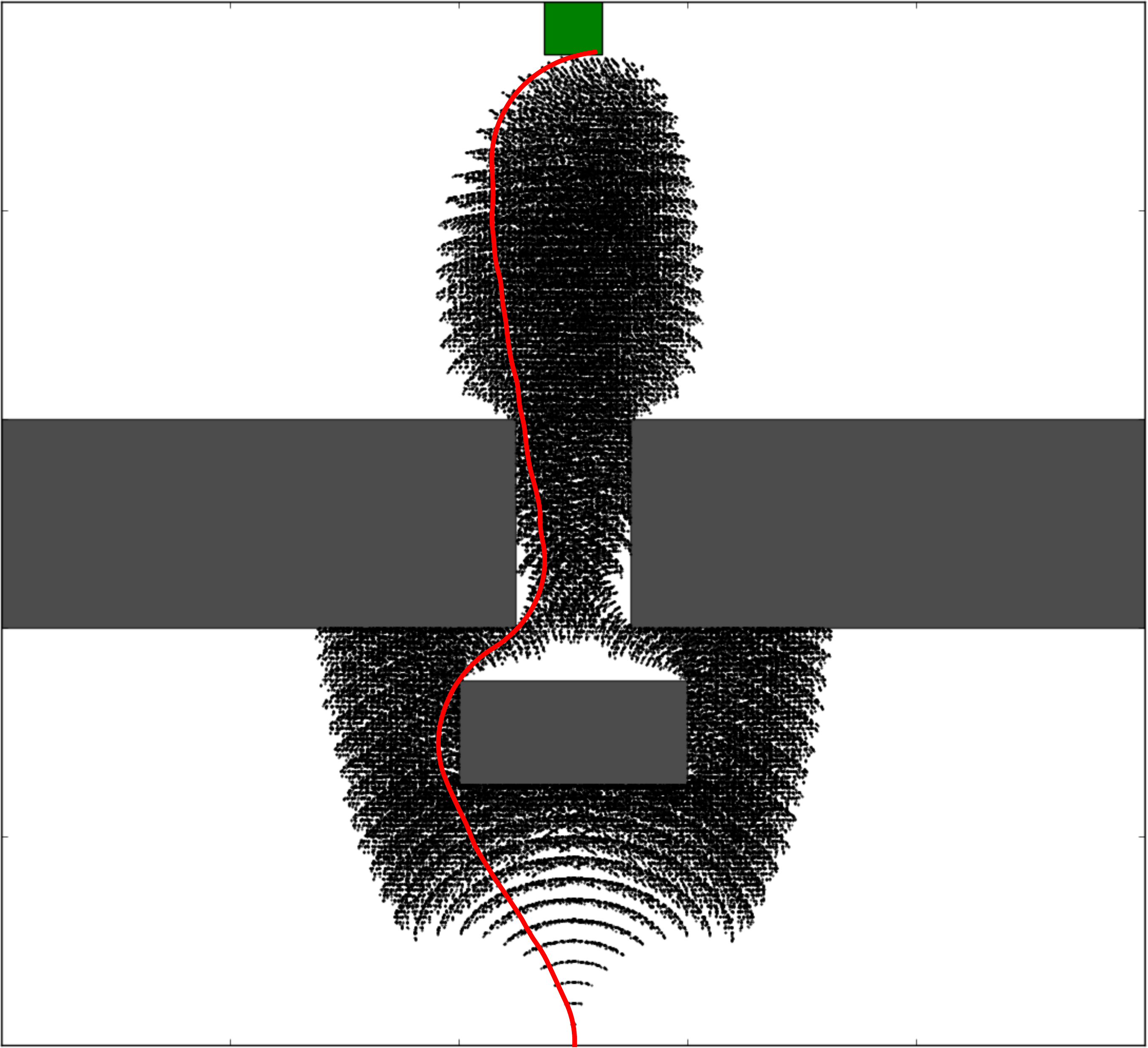}
	\includegraphics[width=1.0\columnwidth]{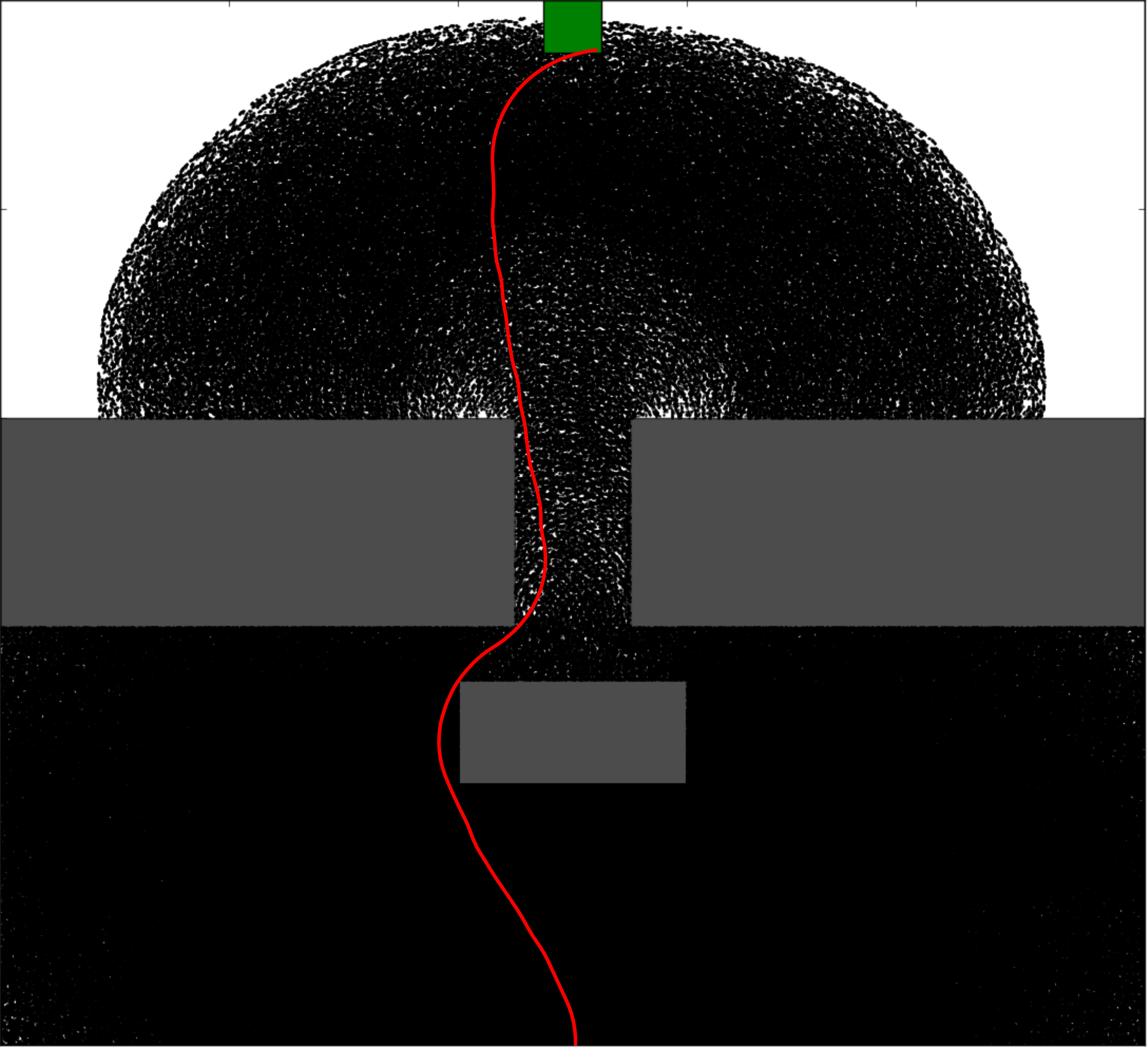}
	\caption{Approximate shortest path in a 2D environment for a simple wheeled robot. The goal set includes a terminal heading specification which explains the right turn at the end of the path. Paths were computed by the GLC method with dots representing the projection of vertices evaluated during the search onto the $x$-$y$ plane. The algorithm was executed with (left) and without (right) the admissible heuristic described in equation (\ref{eq:dubins_heuristic}) of Example \ref{ex:dubins}. The informed GLC method obtains the illustrated solution in 209341 iterations while the standard GLC method obtains the solution in 2380952 iterations.}
\end{figure*}
\begin{ex}\label{ex:dubins}
	Consider a simple wheeled robot with states $(x,y,\theta)^T\in \mathbb{R}^3$ and whose mobility is described by
	\begin{equation}\arraycolsep=1.4pt\def\arraystretch{1.5} \label{eq:reed_shepp_car}
	\begin{array}{rcl}
	\dot{x} & = & \cos(\theta), \\
	\dot{y} & = & \sin(\theta), \\
	\dot{\theta} & = & u.
	\end{array}
	\end{equation}
	Let $X_\mathrm{free}=\mathbb{R}^3$, $X_\mathrm{goal}=\{(0,0,0)^T\}$, and  $\Omega=\mathbb{R}$.
	The cost functional measures the duration of the trajectory. 
	\begin{equation}
	J(x,u)=\int_{[0,T]} 1\,\mu(dt).
	\end{equation}
	Equivalently, this is the path length in the $x$-$y$ plane.
	As a candidate heuristic, take the length of the line segment connecting the $x$-$y$ coordinate to the origin.
	\begin{equation}\label{eq:unicycle_heuristic}
	H_1(x,y,\theta)=\Vert (x,y)^T \Vert .
	\end{equation}
	The intuition being that the shortest path in the absence of the differential constraint will be shorter than the shortest path for the constrained system.
	The admissibility condition is verified for this heuristic using the Cauchy-Schwarz inequality. 
	Inserting the expression for the heuristic into  (\ref{eq:AH2}) yields 
	\begin{equation}\arraycolsep=1.4pt\def\arraystretch{1.75}
	\begin{array}{l}
	\left\langle \nabla H(x,y,\theta),f(x,y,\theta)\right\rangle +g(x,y,\theta,u)\\
	=\dfrac{\left\langle (x,y,0)^{T},(\cos(\theta),\sin(\theta),u)^{T}\right\rangle }{\left\Vert (x,y)^{T}\right\Vert }+1\\
	\geq-\dfrac{\left\Vert (x,y)^{T}\right\Vert \left\Vert (\cos(\theta),\sin(\theta))^{T}\right\Vert }{\left\Vert (x,y)^{T}\right\Vert}+1\\
	\geq-1+1\\
	=0.
	\end{array}
	\end{equation}
	Thus, the heuristic is admissible.
	Next, consider a restriction of the control actions to $\Omega =[-1,1]$.
	The original heuristic remains valid since the old problem is a relaxation of the new problem.
	Additionally, we can consider a second heuristic to augment the first,
	\begin{equation}
	H_2(x)=|\theta|.
	\end{equation}
	With the added constraint, this heuristic satisfies Lemma \ref{lem:admissible_condition},
	\begin{equation}  \arraycolsep=1.4pt\def\arraystretch{1.5}
	\begin{array}{rcl}
	\left\langle \nabla H_2(x),f(x,u)\right\rangle +g(x,u) & = & \dfrac{\theta u}{|\theta|}+1\\
	& \geq & -\dfrac{|\theta||u|}{|\theta|}+1\\
	& \geq & -1+1\\
	& = & 0.
	\end{array}
	\end{equation}
	We can then combine these heuristics in the input constrained problem,
	\begin{equation}\label{eq:dubins_heuristic}
	H(x,y,\theta )=\max \{ \left\Vert (x,y) \right\Vert,|\theta |\}
	\end{equation}

\end{ex}

The heuristic in (\ref{eq:dubins_heuristic}) was used to plan a feasible path in a 2D environment illustrated in Figure \ref{fig:dubins_demo}. 
For the demonstration the singleton goal set was approximated by a small cube centered at $(0,0,0)^T$ as required by the motion planning algorithm. The use of the heuristic reduces the number of iterations of the algorithm by $91\%$. 

The last example considers a problem with a quadratic regulator objective instead of a minimum time objective.
\begin{ex}
	Consider a simple pendulum with dynamics
	\begin{equation}
	\begin{array}{rcl}
	\dot{\theta} & = & \omega,\\
	\dot{\omega} & = & \sin(\theta)+u.
	\end{array}
	\end{equation}
	Let $X_\mathrm{free}=\mathbb{R}^{2}$, $\Omega=[-1,1]$, and $X_\mathrm{goal}=\{(0,0)^{T}\}$.
	The cost function will be a typical quadratic regulator cost.
	\begin{equation}
	J=\int_{0}^{T}\rho(\theta(t)^{2}+\omega(t)^{2}+u(t)^{2})\,\,\mu(dt).
	\end{equation}
	Select a heuristic of the form 
	\begin{equation}
	H(\theta,\omega)=\frac{\alpha}{2}\left(\theta^{2}+\omega^{2}\right).
	\end{equation}
	Checking the admissibility condition,
	\begin{equation}\arraycolsep=1.4pt\def\arraystretch{1.5}
	\begin{array}{l}
	\left\langle \nabla H(x),f(x,u)\right\rangle +g(x,u)\\
	=\alpha\theta\omega+\alpha\omega\sin(\theta)+\alpha\omega u+\rho(\theta^{2}+\omega^{2}+u^{2})\\
	\geq-|2\alpha\theta\omega|-|\alpha\omega u|+\rho(\theta^{2}+\omega^{2}+u^{2})\\
	\geq-\alpha(\theta^{2}+\omega^{2})-\frac{1}{2}\alpha(\omega^{2}+u^{2})+\rho(\theta^{2}+\omega^{2}+u^{2})\\
	=(\rho-\frac{3}{2}\alpha)\theta^{2}+(\rho-\frac{3}{2}\alpha)\omega^{2}+(\rho-\frac{1}{2}\alpha)u^{2}.
	\end{array}
	\end{equation}
	The inequality in (\ref{eq:basic_sos}) was used in the third step of this derivation.
	The above quantity is nonnegative and therefore $H$ is an admissible heuristic for $\alpha \leq \frac{2}{3}\rho$.
\end{ex}

\subsection{SOS Heuristic Optimization Examples}
\label{sec:Examples}

The next two examples demonstrate the SOS programming formulation described in Section \ref{sec:Characterising-Admissibility-as}. 
In both examples, a closed form solution for the value function $V(x)$ is known for $X_\mathrm{free}=\mathbb{R}$ and $X_\mathrm{free}=\mathbb{R}^2$ respectively. 
This solution provides a useful point of comparison for the computed heuristics.
These examples also illustrate flexibility in selecting a measure on $X_\mathrm{free}$.
Intuitively, the measure is a tuning parameter that places greater emphasis on the optimization over certain subsets of $X_{free}$.  
These example problems were implemented using the SOS module in YALMIP \cite{yalmip} and solved using SDPT3 for the underlying semidefinite program \cite{sdpt3}. 
To further illustrate the approach, YALMIP scripts for these examples can be found in~\cite{SOS_heuristics}.

\begin{ex}[Single Integrator (1D)] \label{ex:1d_kine}
To illustrate the procedure, we revisit Example \ref{ex:spp} in the 1-dimensional case. 
The differential constraint is given by $f(x,u)=u$ where $x,u\in\mathbb{R},$
$X_\mathrm{free}=[-1,1],$ $\Omega=[-1,1]$, and $X_\mathrm{goal}=\{0\}$. 
Again we use the minimum time objective where $g(x,u)=1$.
The value function $V(x)=|x|$ is obtained by inspection.
The heuristic is parameterized by the coefficients of a univariate polynomial of degree $2d$
\begin{equation}
H(x)=\sum_{i=0}^{2d}c_{i}x^{i}.
\end{equation}
Using a discrete measure on $[-1,1]$ concentrated at the boundary the SOS program is, 
\begin{equation}\arraycolsep=1.4pt\def\arraystretch{1.5}  \label{eq:simple_demo}
\begin{array}{rll}
\underset{H,\lambda_{x},\lambda_{u}}{\max} & \left\{ H(1)+H(-1)\right\} & \\
{\rm subject\, to:} &H(0)=0, & \\
\left(\frac{d}{dx}H(x)\right)u+1 & &\\
-\lambda_{x}(x)(1-x^{2})-\lambda_{u}(u)(1-u^{2}) & \in SOS,&\\
\lambda_{x}(x),\lambda_{u}(u) & \in SOS.&
\end{array}
\end{equation}
The numerical solution for polynomial heuristics with increasing degree is shown in Figure \ref{fig:1dheuristic}. 
%

\begin{figure}
\centering{}\includegraphics[width=1.0 \columnwidth]{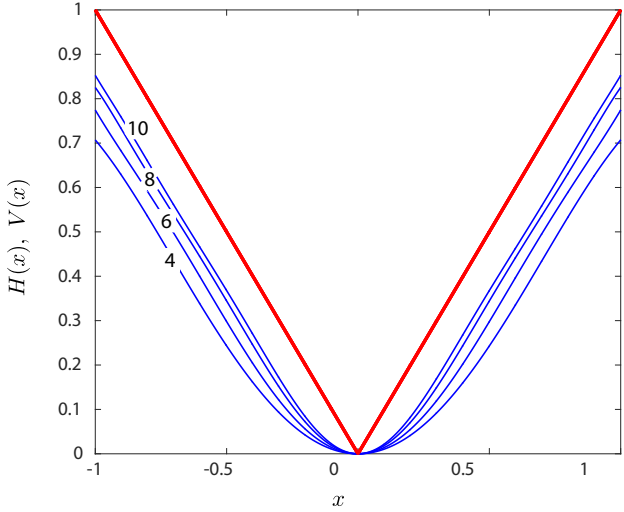}\caption{Univariate polynomial heuristics of degrees 4, 6, 8, and 10 for the 1D single integrator shown in blue. The value function is shown in red. Polynomial heuristics with higher degree provide better underestimates of the value function. \label{fig:1dheuristic}}
\end{figure}
\end{ex}

\begin{figure} \label{fig:1d_dbl}
	\centering
	\includegraphics[width=\columnwidth]{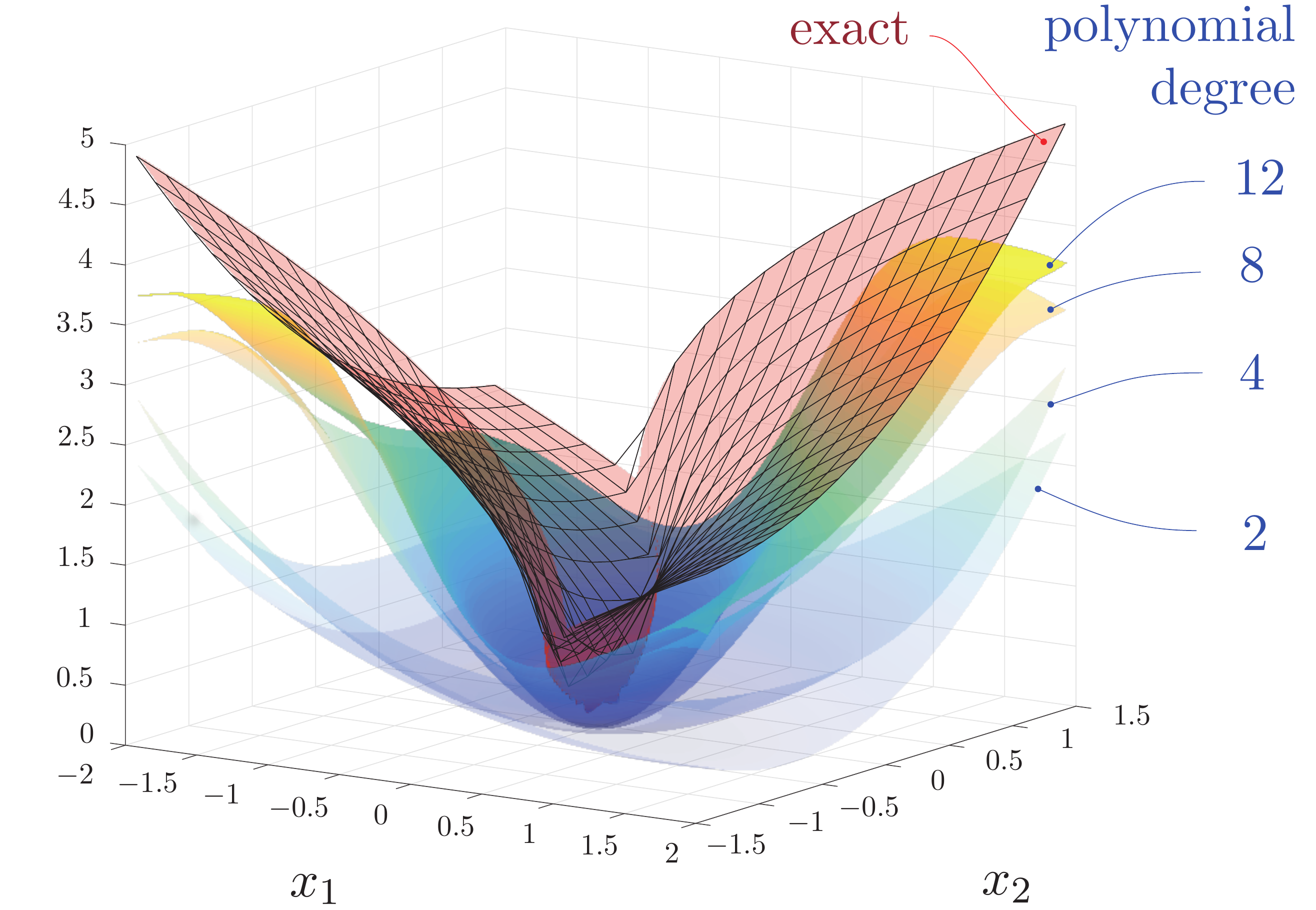}
	\caption{Polynomial heuristics of degree 2, 4, 8, and 12 for the 1D double integrator in comparison with the known value function shown in red. Polynomial heuristics with higher degree provide better underestimates of the value function. }
\end{figure}

\begin{ex}[Double Integrator (1D)]

As an example with a more complex value function take the vector field
\begin{equation}
\left(\begin{array}{c}
\dot{x}_{1}\\
\dot{x}_{2}
\end{array}\right)=\left(\begin{array}{c}
x_{2}\\
u
\end{array}\right),
\end{equation}
and the minimum-time cost functional
\begin{equation}
J(x,u)=\int_0^T 1\,dt.
\end{equation}
The remaining problem data for this example are $X_\mathrm{free}=[-3,3]^2$, $\Omega = [-1,1]$, and $X_\mathrm{goal} = \{(0,0)^T\}$. 
%
%

%
Polynomial heuristics of degree $2d$ of the form
\begin{equation}
H(x_1, x_2) = \sum_{p + q\leq 2d} c_{p, q}\; x_1^p x_2^q,
\end{equation}
are computed for $X_\mathrm{free} = [-3, 3]^2$ and $\Omega = [-1, 1]$.
In this example the support for the measure $m$ is $S=[-2,2] \times [-\sqrt 2, \sqrt 2]$. 
This focuses the optimization in a region around the goal while maintaining admissibility of the heuristic over all of $X_\mathrm{free}$.
The SOS program is formulated as follows
\begin{equation}\arraycolsep=1.4pt\def\arraystretch{1.5} 
\begin{array}{rl}
\underset{H,\lambda_{x},\lambda_{u}}{\max}&\int_{S}H(x_{1},x_{2})\,\, m(dx) \\ 
{\rm subject\, to:} & H(0,0)=0,\\ 
 \left(\nabla H(x_{1},x_{2})\right)\left(\begin{array}{c}
x_{2}\\
u
\end{array}\right)+1 & \\
 -\lambda_{x_{1}}(x_{1})\left(9-x_{1}^{2}\right) &  \\
-\lambda_{x_{2}}(x_{2})\left(9-x_{2}^{2}\right) &  \\
-\lambda_{u}(u)(1-u^{2}) & \in SOS,\\
\lambda_{x_{1}}(x_{1}),\lambda_{x_{2}}(x_{2}),\lambda_{u}(u)  & \in SOS.
\end{array} 
\end{equation}

The optimized heuristics of increasing degree are shown in Figure \ref{fig:1d_dbl} together with the value function for $X_\mathrm{free}=\mathbb{R}^n$. 
\end{ex}

\begin{rem}
	In the last example, the optimization focused on the region $[-2,2] \times [-\sqrt 2, \sqrt 2]$ instead of $[-3,3]^2$.
	The reason for this is that some states in $[-3,3]^2$ cannot reach the goal without leaving $[-3,3]^2$.
	As a consequence the value function is unbounded at these states. 
	A remarkable observation is that the resulting SOS program does not admit a maximum when the integral includes a subset of $X_\mathrm{free}$ where the value function is unbounded.   
	This is entirely consistent with the theoretical results since the heuristic is free to go unbounded over this set as well.
	
\end{rem}

\section{Conclusions and Future Work}
We have provided a sufficient condition for verifying the admissibility of a candidate heuristic in general kinodynamic motion planning problems and demonstrated through several examples how to utilize the condition.
The admissibility condition was then used to formulate a linear program over the space of candidate heuristics whose optimal solution coincides with classical solutions to the HJB equation.
Using sum-of-squares programming we were able to provide approximate solutions to this optimization in polynomial time.
This provides the first general synthesis procedure for admissible heuristics to kinodynamic motion planning problems. 

Automatic synthesis of admissible heuristics in kinodynamic motion planning will be a useful asset to many of the recently developed planning algorithms.
Efforts to further develop this technique are being pursued. 
In the sequel, symmetry reduction techniques from optimal control theory will be applied to reduce the size of the resulting sum-of-squares program. 
We will also explore using the DSOS and SDSOS~\cite{ahmadi2014dsos} programming techniques which would enable using polynomial heuristics with higher degree.

\bibliographystyle{ieeetr}
\bibliography{references}

\begin{thebibliography}{10}

\bibitem{hart1968formal}
P.~E. Hart, N.~J. Nilsson, and B.~Raphael, ``A formal basis for the heuristic
  determination of minimum cost paths,'' {\em Systems Science and Cybernetics,
  IEEE Transactions on}, vol.~4, no.~2, pp.~100--107, 1968.

\bibitem{karaman2010optimal}
S.~Karaman and E.~Frazzoli, ``Optimal kinodynamic motion planning using
  incremental sampling-based methods,'' in {\em 49th IEEE conference on
  decision and control (CDC)}, pp.~7681--7687, IEEE, 2010.

\bibitem{hauser2015asymptotically}
K.~Hauser and Y.~Zhou, ``Asymptotically optimal planning by feasible
  kinodynamic planning in state-cost space,'' {\em arXiv preprint
  arXiv:1505.04098}, 2015.

\bibitem{paden2016generalized}
B.~Paden and E.~Frazzoli, ``A generalized label correcting method for optimal
  kinodynamic motion planning,'' {\em arXiv preprint arXiv:1607.06966}, 2016.

\bibitem{informedRRT}
J.~D. Gammell, S.~S. Srinivasa, and T.~D. Barfoot, ``Informed {RRT*}: Optimal
  incremental path planning focused through an admissible ellipsoidal
  heuristic,'' in {\em International Conference on Intelligent Robots and
  Systems}, 2014.

\bibitem{batchInformedRRT}
J.~D. Gammell, S.~S. Srinivasa, and T.~D. Barfoot, ``Batch informed trees
  (bit*): Sampling-based optimal planning via the heuristically guided search
  of implicit random geometric graphs,'' in {\em International Conference on
  Robotics and Automation}, pp.~3067--3074, IEEE, 2015.

\bibitem{fleming1988generalized}
W.~H. Fleming and D.~Vermes, ``Generalized solutions in the optimal control of
  diffusions,'' in {\em Stochastic Differential Systems, Stochastic Control
  Theory and Applications}, pp.~119--127, Springer, 1988.

\bibitem{vinter1993convex}
R.~Vinter, ``Convex duality and nonlinear optimal control,'' {\em SIAM journal
  on control and optimization}, vol.~31, no.~2, pp.~518--538, 1993.

\bibitem{parrilo2004sum}
P.~A. Parrilo, {\em Structured semidefinite programs and semialgebraic geometry
  methods in robustness and optimization}.
\newblock PhD thesis, Citeseer, 2000.

\bibitem{yalmip}
J.~Lofberg, ``{YALMIP}: A toolbox for modeling and optimization in {MATLAB},''
  in {\em Computer Aided Control Systems Design, 2004 IEEE International
  Symposium on}, pp.~284--289, IEEE, 2004.

\bibitem{SOS_heuristics}
B.~Paden, V.~Varricchio, and E.~Frazzoli, ``Sum-of-squares heuristic synthesis
  for kinodynamic motion planning,''
\newblock Available at:
  \url{https://github.com/bapaden/Sum_of_Squares_Admissible_Heuristics/releases}.

\bibitem{crandall1983viscosity}
M.~G. Crandall and P.-L. Lions, ``Viscosity solutions of hamilton-jacobi
  equations,'' {\em Transactions of the American Mathematical Society},
  vol.~277, no.~1, pp.~1--42, 1983.

\bibitem{sedumi}
J.~F. Sturm, ``Using {SeDuMi} 1.02, a {MATLAB} toolbox for optimization over
  symmetric cones,'' {\em Optimization methods and software}, vol.~11, no.~1-4,
  pp.~625--653, 1999.

\bibitem{sdpt3}
K.-C. Toh, M.~J. Todd, and R.~H. Tutuncu, ``{SDPT3}-a {MATLAB} software package
  for semidefinite programming, version 1.3,'' {\em Optimization methods and
  software}, vol.~11, no.~1-4, pp.~545--581, 1999.

\bibitem{sostools}
S.~Prajna, A.~Papachristodoulou, and P.~A. Parrilo, ``Introducing {SOSTOOLS}: A
  general purpose sum-of-squares programming solver,'' in {\em 41st Conference
  on Decision and Control}, vol.~1, pp.~741--746, IEEE, 2002.

\bibitem{ahmadi2014dsos}
A.~A. Ahmadi and A.~Majumdar, ``{DSOS} and {SDSOS} optimization: {LP} and
  {SOCP}-based alternatives to sum-of-squares optimization,'' in {\em 48th
  Annual Conference on Information Sciences and Systems}, pp.~1--5, IEEE, 2014.

\end{thebibliography}

\section*{Appendix}
Consistency of a heuristic is a type of triangle inequality. 
To define consistency, the value function and heuristic for the kinodynamic motion planning problem must be parametrized by the goal set. 
This is denoted $V(z;X_\mathrm{goal})$ and $H(z;X_\mathrm{goal})$.
A heuristic $H(\,\cdot\,;X_\mathrm{goal})$ is \textit{consistent} if,
\begin{equation}\arraycolsep=1.4pt\def\arraystretch{1.5}
\begin{array}{l}
H(z;X_\mathrm{goal})=0,\quad \forall z\in X_\mathrm{goal},\\
H(z;X_\mathrm{goal}) \leq V(z;\{y\}) + H(y;X_\mathrm{goal}), \quad \forall y,z\in X_\mathrm{free}.
\end{array}
\end{equation}
Note that the inequality above involves the optimal cost-to-go from $z$ to $y$. 
\begin{lem}[Consistency]\label{lem:consistency_condition}
	A heuristic $H(\,\cdot \,;X_\mathrm{goal})$  is consistent if:
	\begin{equation}\tag{CH1}\label{eq:CH1}
	H(z;X_\mathrm{goal}) = 0, \qquad \forall z \in  X_\mathrm{goal},
	\end{equation}
	and
	\begin{equation}\tag{CH2}\label{eq:CH2}
	\left\langle \nabla_{z}H(z;X_\mathrm{goal}),f(z,w)\right\rangle +g(z,w) \geq 0,
	\end{equation} 
	for all $u\in\Omega$ and all $z\in X_{free}$.
\end{lem}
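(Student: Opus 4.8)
The plan is to reduce the consistency inequality to the Admissibility Lemma (Lemma \ref{lem:admissible_condition}) applied to a whole family of singleton goal sets. The first requirement of consistency, namely that $H(z;X_\mathrm{goal})=0$ for all $z\in X_\mathrm{goal}$, is exactly condition (\ref{eq:CH1}), so it holds by hypothesis and nothing further is needed there.

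For the triangle inequality, I would fix an arbitrary $y\in X_\mathrm{free}$ and introduce the shifted function
\begin{equation}
\tilde{H}(z)=H(z;X_\mathrm{goal})-H(y;X_\mathrm{goal}),
\end{equation}
treating $y$ as a parameter and $z$ as the free variable. The key observation is that the subtracted term is constant in $z$, so $\nabla_z \tilde{H}(z)=\nabla_z H(z;X_\mathrm{goal})$. Consequently $\tilde{H}$ satisfies exactly the same differential inequality (\ref{eq:AH2}) that $H(\cdot\,;X_\mathrm{goal})$ does, which is guaranteed by (\ref{eq:CH2}). Moreover, evaluating at $z=y$ gives $\tilde{H}(y)=0$, so $\tilde{H}$ satisfies condition (\ref{eq:AH1}) for the problem whose goal set is the singleton $\{y\}$.

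With both admissibility conditions verified, I would apply Lemma \ref{lem:admissible_condition} to the kinodynamic problem with goal set $\{y\}$ and conclude that $\tilde{H}$ is admissible for that problem, i.e.
\begin{equation}
\tilde{H}(z)\leq V(z;\{y\}),\qquad \forall z\in X_\mathrm{free}.
\end{equation}
Substituting the definition of $\tilde{H}$ and rearranging yields
\begin{equation}
H(z;X_\mathrm{goal})\leq V(z;\{y\})+H(y;X_\mathrm{goal}),
\end{equation}
which is the desired triangle inequality. Since $y$ was arbitrary, the argument is complete.

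The crux is conceptual rather than computational: one must recognize that condition (\ref{eq:CH2}) is independent of the goal set, since it constrains only the gradient of $H$. A single differential inequality therefore simultaneously certifies admissibility of $H$ toward \emph{every} possible terminal point $y$. Once this is seen, the constant shift by $H(y;X_\mathrm{goal})$ simply realigns the boundary condition (\ref{eq:AH1}) to the singleton $\{y\}$, and the result follows immediately from Lemma \ref{lem:admissible_condition} without re-deriving the integral estimate in its proof.
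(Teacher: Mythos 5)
Your proof is correct, and it takes a genuinely more modular route than the paper's. The paper proves consistency by repeating the integral estimate from Lemma \ref{lem:admissible_condition} almost verbatim: it picks a feasible trajectory $x$ with $x(0)=z$, $x(T)=y$, integrates $\frac{d}{dt}H(x(t);X_\mathrm{goal})$ along it, bounds the integrand using (\ref{eq:CH2}), concludes that $H(z;X_\mathrm{goal})-H(y;X_\mathrm{goal})$ lower-bounds $J(x,u)$ for every such trajectory, and then passes to the greatest lower bound to obtain $V(z;\{y\})$. You instead factor this work out: by observing that (\ref{eq:CH2}) coincides with (\ref{eq:AH2}) (the differential condition never references the goal set) and that the constant shift $\tilde H(z)=H(z;X_\mathrm{goal})-H(y;X_\mathrm{goal})$ realigns the boundary condition (\ref{eq:AH1}) to the singleton goal $\{y\}$, you invoke Lemma \ref{lem:admissible_condition} as a black box for the planning problem with goal set $\{y\}$, for each fixed $y\in X_\mathrm{free}$. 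The two arguments rest on identical mathematics --- unwinding your application of Lemma \ref{lem:admissible_condition} reproduces exactly the paper's computation --- but your version avoids duplicating the integral estimate and makes explicit the conceptual point that a single goal-independent differential inequality certifies admissibility toward every terminal point simultaneously; the paper's version, by re-deriving the estimate, stays self-contained and does not ask the reader to notice that Lemma \ref{lem:admissible_condition} applies uniformly over arbitrary goal sets. One small point worth stating explicitly in your write-up: the application of Lemma \ref{lem:admissible_condition} with goal set $\{y\}$ is legitimate because $y\in X_\mathrm{free}$ implies $\{y\}\subset X_\mathrm{free}$ is a valid goal set for the same dynamics, cost, control set, and free space, and the value function of that problem is precisely the $V(\,\cdot\,;\{y\})$ appearing in the definition of consistency.
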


The proof is nearly identical to that of Lemma \ref{lem:admissible_condition}.
\begin{proof}
	Choose a trajectory $x$ and associated control signal $u$ such that $x(0)=z$ and $x(T)=y$.
	Then
	\begin{equation} \arraycolsep=1.4pt\def\arraystretch{1.5}
	\begin{array}{ll}
	H(x(0);X_\mathrm{goal})-H(x(T);X_\mathrm{goal})& \\
	=-\int_{0}^{T} \frac{d}{dt} H(x(t);X_\mathrm{goal}) \,\mu(dt) &\\
	=  -\int_{0}^{T}  \left\langle \nabla H(x(t);X_\mathrm{goal}),f(x(t),u(t)) \right\rangle \,\mu(dt)& \\
	\leq  \int_{0}^{T}  g(x(t),u(t) \,\mu(dt) & \\
	= J(x,u).&
	\end{array}
	\end{equation} 
	Thus, $H(z;X_\mathrm{goal})-H(y;X_\mathrm{goal})$ lower bounds $J(x,u)$ for any trajectory starting at $z$ and terminating at $y$.
	Since $V(\,\cdot \,;{y})$ is the greatest lower bound to the cost of such trajectories we have 
	\begin{equation}
	H(z;X_\mathrm{goal})-H(y;X_\mathrm{goal}) \leq V(z;{y}),\qquad \forall y,z\in X_\mathrm{free}.
	\end{equation}
	Rearranging the expression above yields the definition of consistency for $H(\,\cdot\,;X_\mathrm{goal})$.
\end{proof}

\end{document}